\documentclass{article}

\usepackage[preprint]{neurips_2024}
\usepackage[utf8]{inputenc}
\usepackage[T1]{fontenc}
\usepackage{hyperref}
\usepackage{url}
\usepackage{booktabs}
\usepackage{amsmath}
\usepackage{amssymb}
\usepackage{amsthm}
\usepackage{mathtools}
\usepackage{graphicx}
\usepackage{xcolor}
\usepackage{microtype}
\usepackage{multirow}
\usepackage{subcaption}
\usepackage{tikz}
\usepackage{algorithm}
\usepackage{algorithmic}
\usepackage{orcidlink}   
\usetikzlibrary{arrows.meta,decorations.pathmorphing}

\newtheorem{proposition}{Proposition}
\newtheorem{lemma}{Lemma}
\newtheorem{corollary}{Corollary}
\newtheorem{definition}{Definition}
\newtheorem{remark}{Remark}

\newcommand{\MoPE}{\textsc{MoPE}}
\newcommand{\EGA}{\textsc{EGA}}
\newcommand{\RoPE}{\textsc{RoPE}}
\newcommand{\R}{\mathbb{R}}
\newcommand{\C}{\mathbb{C}}
\newcommand{\E}{\mathbb{E}}
\newcommand{\F}{\mathcal{F}}

\newcommand{\inner}[2]{\left\langle#1,\,#2\right\rangle}
\definecolor{myblue}{RGB}{33,150,243}
\definecolor{myorange}{RGB}{255,152,0}
\definecolor{mygreen}{RGB}{76,175,80}
\definecolor{myred}{RGB}{244,67,54}
\definecolor{mypurple}{RGB}{156,39,176}

\title{Beyond Sinusoids: A Morlet Wavelet Framework
  for Transformer Positional Encoding}

\author{%
  Athanasios Zeris%
  \thanks{Independent Researcher, Athens, Greece.\\
  Correspondence: \texttt{athzeris@gmail.com}.\\
  ORCID: \texttt{https://orcid.org/0009-0002-6907-2400}.\\
  Part of a six-paper series on spectral methods
  in transformer attention.}\\
  \texttt{https://orcid.org/0009-0002-6907-2400}
}

\begin{document}
\maketitle

\begin{abstract}
Standard positional encodings for transformers ---
sinusoidal and rotary (\RoPE{}) --- treat every
position as equally local: they encode where a token
is, but not how far its positional influence should
extend.
We propose that the Morlet wavelet, which
simultaneously minimises uncertainty in position and
frequency, is the natural basis for positional
encoding, and introduce \textbf{Morlet Positional
Encoding} (\MoPE{}): each embedding dimension learns
its own frequency and locality bandwidth from data.

The main theoretical result is a unification:
sinusoidal PE and the \RoPE{} correlation kernel
both emerge as limiting cases of \MoPE{} when
locality is switched off ($\sigma_i \to \infty$).
The phase of \MoPE{} recovers the \RoPE{} rotation
angle exactly; the amplitude adds a learned Gaussian
locality kernel that standard encodings lack.

Empirically, \MoPE{} combined with Energy-Gated
Attention achieves $+0.119$ improvement over
standard attention on TinyShakespeare, outperforming
either component alone.
Analysis of the learned parameters reveals that all
128 frequency-bandwidth pairs converge to the
wavelet admissibility boundary --- an empirical
observation consistent with a companion result on
energy gating, suggesting a reproducible property
of character-level language signals that warrants
further investigation.
\end{abstract}

\section{Introduction}
\label{sec:intro}

Positional encoding is a foundational component of the
transformer architecture~\citep{vaswani2017attention},
yet its theoretical grounding remains incomplete.
The original sinusoidal encoding was motivated by the
desire for encodings that generalize to sequence lengths
unseen during training~\citep{vaswani2017attention}.
Rotary Position Embedding (\RoPE{})~\citep{su2021roformer}
was motivated by encoding relative position as a rotation
in complex space.
ALiBi~\citep{press2022train} was motivated by adding
a simple linear locality bias.
Each was developed independently and evaluated empirically,
without a unifying mathematical framework explaining
their relationships or what they share.

We propose that the principal oscillatory positional
encodings --- sinusoidal and rotary --- are limiting
cases of a single framework: the complex Morlet
wavelet applied to the position axis.
This framework, which we call Morlet Positional Encoding
(\MoPE{}), reveals two independent dimensions of
positional representation:

\begin{enumerate}
  \item \textbf{Phase} $\omega_i b$: which part of the
        oscillation cycle position $b$ sits in at
        frequency $\omega_i$.
        This is the \emph{same} as the \RoPE{} rotation
        angle $\theta_j b$ --- the two encodings share
        identical phase structure.
  \item \textbf{Amplitude} $e^{-b^2/2\sigma_i^2}$:
        how strongly this positional signal is expressed,
        decaying with distance from the origin at a rate
        determined by the learned bandwidth $\sigma_i$.
        This Gaussian locality has \emph{no analog} in
        sin/cos PE or \RoPE{}.
\end{enumerate}

Standard encodings fix the amplitude at 1, corresponding
to $\sigma_i = \infty$ --- appropriate for stationary
signals but suboptimal for language, which is
non-stationary at all scales.

The Heisenberg uncertainty principle for time-frequency
analysis~\citep{mallat1999wavelet} formalizes the
tradeoff that \MoPE{} learns:
\begin{equation}
  \Delta b_i \cdot \Delta\omega_i \geq \frac{1}{2}
  \label{eq:heisenberg}
\end{equation}
where $\Delta b_i = \sigma_i$ is the position uncertainty
(spatial extent of influence) and
$\Delta\omega_i = 1/\sigma_i$ is the frequency uncertainty
(bandwidth around center frequency $\omega_i$).
Sin/cos and \RoPE{} set $\sigma_i = \infty$, saturating the
uncertainty principle from the frequency side: they know
exactly what frequency they represent but cannot localize
in position.
\MoPE{} finds the data-optimal tradeoff for each dimension. Code available at:
https://github.com/AthanasiosZeris/energy-gated-attention.

\paragraph{Contributions.}
\begin{enumerate}
    \item We show that sin/cos PE is a limiting case of
        \MoPE{} ($\sigma_i\to\infty$, Proposition~1),
        and that the \MoPE{} correlation kernel
        recovers the \RoPE{} phase factor in the same
        limit (Proposition~2; note: \RoPE{} operates
        on $Q,K$ rotations, not positional vectors ---
        the correspondence is at the kernel level).
        ALiBi is conceptually analogous as a
        zero-frequency locality limit
        (Remark~\ref{rem:alibi}).
  \item We establish the precise relationship between the
        \MoPE{} phase and the \RoPE{} rotation angle: they
        are the same mathematical object, and the \MoPE{}
        cross-correlation between positions equals the
        \RoPE{} attention score modulated by a learned
        Gaussian locality kernel.
  \item Combined with \EGA{}~\citep{Zeris2025ega},
        \MoPE{} achieves $+0.119$ improvement over standard
        attention --- exceeding either component alone ---
        consistent with spectral salience and
        time-frequency locality being complementary.
  \item We analyze learned \MoPE{} parameters, showing
        that all 128 dimensions converge to the
        admissibility boundary ($\omega_i\sigma_i = 5$),
        with the optimizer consistently pushing toward
        the constraint limit --- suggesting stronger
        locality may be preferred at this scale ---
        and that all positional capacity concentrates
        in the character-to-word scale
        ($\sigma_i \in [1.49, 4.50]$ tokens).
        This boundary saturation is consistent with
        observations from the \EGA{} energy gate in
        Paper~1~\citep{Zeris2025ega}, suggesting
        a reproducible empirical property across two
        independent experiments (both used the same
        projection-based implementation; unconstrained
        validation is needed to confirm the finding).
\end{enumerate}

\section{Background}
\label{sec:background}

\subsection{Standard Positional Encodings}

Let $b \in \{0,\ldots,T-1\}$ denote token position and
$d$ the embedding dimension.
All standard encodings add a position-dependent vector
$\text{PE}(b) \in \R^d$ to the token embedding.

\paragraph{Sinusoidal PE~\citep{vaswani2017attention}.}
\begin{align}
  \text{PE}(b, 2i)   &= \sin(\omega_i b) \\
  \text{PE}(b, 2i+1) &= \cos(\omega_i b)
\end{align}
where $\omega_i = 1/10000^{2i/d}$.
Fixed, not learned, uniform amplitude across all positions.

\paragraph{\RoPE{}~\citep{su2021roformer}.}
Position is encoded as a rotation of the query/key vectors:
\begin{equation}
  f(x, b) = x \cdot e^{i\theta b},
  \quad \theta_j = 1/10000^{2j/d_k}
\end{equation}
The attention score becomes:
\begin{equation}
  q_i \cdot k_j = \mathrm{Re}\!\left[
    \sum_m q_m^* k_m \cdot e^{i\theta_m(j-i)}
  \right]
  \label{eq:rope_score}
\end{equation}
encoding only relative position $j-i$.
No amplitude envelope --- all positions equally present.

\paragraph{ALiBi~\citep{press2022train}.}
Adds a linear bias to attention scores:
$e_{ij} \leftarrow e_{ij} - m|i-j|$
where $m$ is a head-specific slope.
Equivalent to exponential locality with no oscillation.

\subsection{Wavelet Theory}

The \textbf{Morlet wavelet} is:
\begin{equation}
  \psi(t) = e^{i\omega_0 t} \cdot e^{-t^2/2}
\end{equation}
a complex sinusoid modulated by a Gaussian envelope.
It is \emph{approximately admissible}:
$\hat{\psi}(0) \approx 0$ when $\omega_0 \geq 5$
(the mean is approximately zero).
Scaled and translated versions:
$\psi_{a,b}(t) = a^{-1/4}\psi((t-b)/\sqrt{a})$
form a continuous wavelet transform
$W_\psi[f](a,b) = \inner{f}{\psi_{a,b}}$.

The \textbf{Heisenberg uncertainty principle} states that
no function can be simultaneously well-localized in both
time and frequency:
$\Delta t \cdot \Delta\omega \geq 1/2$.
The Morlet wavelet saturates this bound---among all
Gaussian-windowed atoms it achieves minimum joint
uncertainty for its given center frequency $\omega_0$.

The \textbf{Wiener--Khinchin theorem} connects the
autocorrelation $R_f(\tau) = \E[f(t)f(t+\tau)]$ of
a stationary signal to its power spectral density:
$S_f(\omega) = \F\{R_f(\tau)\}$.
For the positional encoding viewed as a signal over
the position axis, this theorem connects the
cross-correlation between positions to the spectral
properties of the encoding ---
and motivates the Morlet wavelet as the natural
positional basis: it is the unique function that
simultaneously minimises position uncertainty
$\Delta b$ and frequency uncertainty $\Delta\omega$,
saturating the Heisenberg bound
$\Delta b \cdot \Delta\omega \geq \tfrac{1}{2}$.

\section{Morlet Positional Encoding}
\label{sec:mope}

\subsection{Definition}

\begin{definition}[\MoPE{}]
Morlet Positional Encoding is defined as:
\begin{align}
  \text{\MoPE{}}(b, 2i)   &= \cos(\omega_i b)
    \cdot e^{-b^2/2\sigma_i^2}
    \label{eq:mope_real} \\
  \text{\MoPE{}}(b, 2i+1) &= \sin(\omega_i b)
    \cdot e^{-b^2/2\sigma_i^2}
    \label{eq:mope_imag}
\end{align}
where $\omega_i > 0$ (center frequency) and
$\sigma_i > 0$ (bandwidth) are learned per dimension.
Written compactly in complex notation:
\begin{equation}
  \text{\MoPE{}}(b, i) = e^{i\omega_i b}
    \cdot e^{-b^2/2\sigma_i^2}
  \label{eq:mope_complex}
\end{equation}
\end{definition}

The encoding has two components with distinct geometric
interpretations:

\textbf{Phase} $\phi_i(b) = \omega_i b$: a linear function
of position encoding where in the oscillation cycle
position $b$ sits.
This is a clock at frequency $\omega_i$---it advances
by $\omega_i$ radians for each token.

\textbf{Amplitude} $A_i(b) = e^{-b^2/2\sigma_i^2}$:
a Gaussian envelope centered at position 0.
Tokens near the start of the sequence have full amplitude
in all dimensions; tokens far from the start have
diminishing amplitude in dimensions with small $\sigma_i$.

\paragraph{The origin prior (structural limitation).}
The amplitude $A_i(b) = e^{-b^2/2\sigma_i^2}$ is
\emph{not} a pure locality prior --- it is an
\textbf{absolute origin prior}: token 5 intrinsically
has stronger positional amplitude than token 200
for \emph{every} sequence, regardless of content.
This means $|\mathrm{PE}(10)| > |\mathrm{PE}(100)|$
by construction, changing the inductive bias
fundamentally.
The current formulation intentionally studies the
simplest Morlet positional encoding despite this bias;
the natural generalisation uses a learned center
$b_{0,i}$ per dimension
(Eq.~\ref{eq:mope_centered}, Section~\ref{para:origin}).
Readers should bear this limitation in mind throughout.

\subsection{The Complex Plane Geometry}

In the complex plane at dimension $i$, \MoPE{} traces
an \textbf{inward spiral} as position $b$ increases
(Figure~\ref{fig:spiral}):
\begin{equation}
  z_i(b) = e^{i\omega_i b} \cdot e^{-b^2/2\sigma_i^2}
  \in \C
\end{equation}
The angle of $z_i(b)$ advances linearly: $\arg z_i(b) = \omega_i b$.
The magnitude of $z_i(b)$ decays: $|z_i(b)| = e^{-b^2/2\sigma_i^2}$.

\RoPE{} traces the same spiral but constrained to the
\textbf{unit circle}: the Gaussian is removed ($\sigma_i=\infty$,
$|z_i(b)|=1$).
Every position has equal amplitude, rotating around the
circle at rate $\omega_i$.
Sin/cos PE is the same unit circle but read only at
$\mathrm{Re}$ and $\mathrm{Im}$ components.

\MoPE{} provides a more general time-frequency-localized
positional representation:

\begin{figure}[t]
\centering
\includegraphics[width=\linewidth]{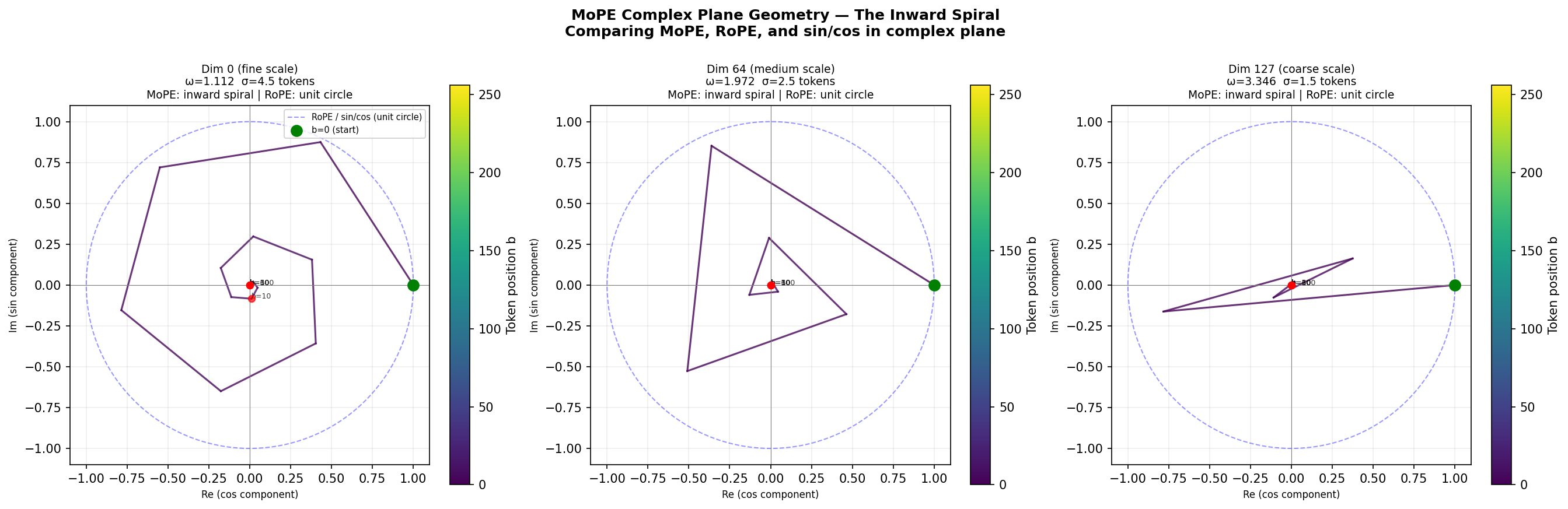}
\caption{
  \textbf{Complex-plane geometry of \MoPE{} vs \RoPE{} and sin/cos PE}
  at three representative embedding dimensions (learned EGA-MORLET
  parameters after 5000 training steps on TinyShakespeare).
  Each panel shows token positions $b = 0,\ldots,256$ as a trajectory
  in the complex plane $(\cos(\omega_i b),\,\sin(\omega_i b))$.
  \textbf{\RoPE{} / sin/cos} (dashed unit circle): every position lies
  on the circle at constant magnitude; only the angle encodes position,
  with no locality.
  \textbf{\MoPE{}} (solid inward spiral): the trajectory spirals toward
  the origin as $|z_i(b)| = e^{-b^2/2\sigma_i^2}$ decays with distance,
  so later tokens have diminishing positional amplitude.
  \textbf{Left} (Dim~0, fine scale, $\omega{=}1.112$, $\sigma{=}4.5$~tok):
  rapid phase advance, multi-revolution spiral---sensitive to
  character-scale position differences.
  \textbf{Centre} (Dim~64, medium scale, $\omega{=}1.972$,
  $\sigma{=}2.5$~tok): intermediate locality, fewer revolutions
  before the spiral collapses.
  \textbf{Right} (Dim~127, coarse scale, $\omega{=}3.346$,
  $\sigma{=}1.5$~tok): slow phase advance and rapid Gaussian
  decay---this dimension is sensitive only to the nearest few tokens.
  The Gaussian envelope is the sole structural difference between
  \MoPE{} and \RoPE{}; it is the geometric expression of
  minimum-uncertainty Gaussian localization in the complex plane.
}
\label{fig:spiral}
\end{figure}

\subsection{The Heisenberg Analysis}

For \MoPE{} at dimension $i$:
\begin{align}
  \Delta b_i  &= \sigma_i
    \quad\text{(position uncertainty = bandwidth)} \\
  \Delta\omega_i &= 1/(2\sigma_i)
    \quad\text{(frequency uncertainty)}
\end{align}
The uncertainty product is:
$\Delta b_i \cdot \Delta\omega_i = 1/2$,
which exactly saturates the Heisenberg bound
(Eq.~\ref{eq:heisenberg}).
Among all encodings parameterised by a Gaussian
envelope, \MoPE{} achieves the minimum joint
uncertainty at every scale --- it inherits the
minimum-uncertainty property of Gaussian-windowed
(Gabor/Morlet) atoms from classical signal processing.

This optimality holds for continuous Gaussian windows;
the discrete, finite-length, and non-reconstructive
nature of the positional encoding means the claim
is more precisely stated as: \MoPE{} \emph{inherits
the minimum-uncertainty structure of Gaussian atoms},
not that it achieves a global optimum over all
possible encodings.

Standard encodings place all dimensions at
$\sigma_i = \infty$: infinite position uncertainty,
zero frequency uncertainty.
This is appropriate for stationary signals.
Language is non-stationary: the distribution
of characters, words, and syntactic structures varies
with position in a document.
The learned $\sigma_i$ values of \MoPE{} adapt to
the non-stationarity of the corpus.

\section{Unification of Standard Positional Encodings}
\label{sec:unification}

We now prove that the two principal oscillatory
positional encodings (sin/cos and \RoPE{}) are
limiting cases of \MoPE{}, and show that ALiBi
is conceptually analogous as a locality limit.

\begin{proposition}[Sin/cos as degenerate \MoPE{}]
\label{prop:sincos}
Sinusoidal positional encoding is the $\sigma_i\to\infty$
limit of \MoPE{}:
\begin{equation}
  \lim_{\sigma_i\to\infty} \text{\MoPE{}}(b, 2i)
  = \cos(\omega_i b)
  \label{eq:sincos_limit}
\end{equation}
\end{proposition}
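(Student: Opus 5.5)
The plan is to prove the statement pointwise in $b$: fix a position $b\in\{0,\ldots,T-1\}$ and a center frequency $\omega_i>0$, and let only $\sigma_i$ vary. By Eq.~\eqref{eq:mope_real}, $\text{\MoPE{}}(b,2i)=\cos(\omega_i b)\,e^{-b^2/2\sigma_i^2}$. The factor $\cos(\omega_i b)$ does not depend on $\sigma_i$. So the whole claim reduces to showing that the amplitude $A_i(b)=e^{-b^2/2\sigma_i^2}$ tends to $1$ as $\sigma_i\to\infty$.

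For the amplitude limit, note that for fixed $b$ the exponent $-b^2/(2\sigma_i^2)\to 0$ as $\sigma_i\to\infty$, since $b^2$ is a finite constant. The exponential is continuous at $0$, so $A_i(b)\to e^0=1$. Multiplying by the constant $\cos(\omega_i b)$ preserves the limit, which gives Eq.~\eqref{eq:sincos_limit}.

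The same argument applied to Eq.~\eqref{eq:mope_imag} gives $\sin(\omega_i b)$. Equivalently, in the complex form of Eq.~\eqref{eq:mope_complex}, $e^{i\omega_i b}e^{-b^2/2\sigma_i^2}\to e^{i\omega_i b}$. Choosing $\omega_i=1/10000^{2i/d}$ then recovers the sinusoidal PE of Section~\ref{sec:background}, up to the ordering of the $\sin$/$\cos$ channels (the paper places $\cos$ in slot $2i$). That ordering difference is a fixed permutation of coordinates, so I would mention it explicitly rather than leave it implicit.

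I would also record a quantitative version: $0\le 1-A_i(b)\le b^2/(2\sigma_i^2)$, which follows from $1-e^{-x}\le x$. Since $b\le T-1$, the convergence is uniform over all positions of a finite-length sequence, with error at most $(T-1)^2/(2\sigma_i^2)$.

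There is no real obstacle here. The only point needing care is that the limit is pointwise in $b$ and is not uniform over all $b\in\mathbb{N}$: for any finite $\sigma_i$, the envelope still vanishes as $b\to\infty$. The uniform bound above handles this for the finite context length $T$ actually used.
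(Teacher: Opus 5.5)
Your proof is correct and follows the same argument as the paper: for fixed $b$, the envelope $e^{-b^2/2\sigma_i^2}\to 1$, and the $\sigma_i$-independent factor $\cos(\omega_i b)$ carries through to the limit. Your additions go beyond what the paper states but are accurate. These are the quantitative bound $1-A_i(b)\le b^2/(2\sigma_i^2)$, which gives uniform convergence over $b\le T-1$, and the explicit sin/cos slot swap and the choice $\omega_i=1/10000^{2i/d}$ needed to match the paper's sinusoidal PE.
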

\begin{proof}
$e^{-b^2/2\sigma_i^2} \to 1$ as $\sigma_i\to\infty$,
so $\text{\MoPE{}}(b,2i) = \cos(\omega_i b) \cdot
e^{-b^2/2\sigma_i^2} \to \cos(\omega_i b)$.
\end{proof}

\begin{corollary}
\MoPE{} is at least as expressive as sin/cos PE.
If the optimal bandwidth for all dimensions is infinite,
gradient descent will recover sin/cos; otherwise \MoPE{}
strictly outperforms it.
\end{corollary}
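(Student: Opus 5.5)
The plan is to read the corollary as a statement about function classes: every sinusoidal encoding is a member of the \MoPE{} family, possibly attained only in a limit. The only substantive claim is the first sentence. The gradient-descent clause is heuristic, and I would restate it as a claim about global minimisers rather than try to prove it as written.

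First, fix the frequencies $\omega_i$ to the sinusoidal schedule $1/10000^{2i/d}$, which is an admissible choice of the learned parameters. Proposition~\ref{prop:sincos} then gives pointwise convergence of the even coordinates to $\cos(\omega_i b)$. The same one-line argument, $e^{-b^2/2\sigma_i^2}\to 1$, gives convergence of the odd coordinates to $\sin(\omega_i b)$. This matches the paper's sinusoidal PE up to swapping the sine and cosine slots, i.e.\ a fixed permutation of coordinates that the first linear layer can absorb.

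Because $b$ ranges over the finite set $\{0,\dots,T-1\}$, the convergence is uniform in $b$. Concretely,
\[
\bigl|1-e^{-b^2/2\sigma_i^2}\bigr|\le \frac{(T-1)^2}{2\sigma_i^2}.
\]
Any network $N$ that is continuous in its positional input therefore satisfies $N(\text{\MoPE{}}_\sigma)\to N(\text{PE})$ as all $\sigma_i\to\infty$. Continuity of the loss then gives
\[
\inf_{\omega,\sigma}\mathcal L(\text{\MoPE{}})\le \mathcal L(\text{sin/cos}).
\]
This is the precise sense of ``at least as expressive'': equality can be approached but possibly not attained, since $\sigma_i=\infty$ lies outside the parameter domain. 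One fix is to reparametrise by $\lambda_i=1/\sigma_i^2\ge 0$, which makes the family closed and contains sin/cos exactly at $\lambda_i=0$.

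For the second sentence, I would prove only the dichotomy about minimisers. If the loss infimum over the closed family is attained solely at $\lambda=0$, the optimal \MoPE{} is sin/cos. Otherwise some $\lambda\neq 0$ achieves a loss no larger than sin/cos, strictly smaller if the minimum is unique. This is the main obstacle. The statement that gradient descent ``will recover'' the optimum is false in general because of non-convexity. Moreover, ``strictly outperforms'' holds only for the training objective at its global optimum, not for test performance. I would flag this clause as an informal interpretation rather than a theorem.
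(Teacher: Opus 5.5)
Your argument for the first sentence follows the paper's own route: the corollary has no separate proof and rests entirely on Proposition~\ref{prop:sincos}. You make that route rigorous through the odd-coordinate limit, the uniform bound over the finite position set, continuity of the loss, and the closure via $\lambda_i=1/\sigma_i^2$. You are also right that the gradient-descent clause is an unsupported heuristic rather than a theorem.

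Two small repairs to your restated dichotomy are needed. First, at $\lambda=0$ the $\omega_i$ are still free. An optimum attained only at $\lambda=0$ is therefore a sinusoidal encoding with \emph{learned} frequencies, and it equals the fixed schedule $1/10000^{2i/d}$ only under an extra hypothesis. The paper's ``recover sin/cos'' clause has the same defect. Second, strictness should be conditioned on the sin/cos point not being a minimiser, rather than on uniqueness. Uniqueness generically fails, because permuting the pairs $(\omega_i,\sigma_i)$ together with the matching residual coordinates leaves the loss unchanged.

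Finally, the PE is added directly into the residual stream, so the first linear layer alone cannot absorb the sine/cosine slot swap. It is absorbed by relabelling the entire residual basis: the token embedding, the LayerNorm gains and biases, and every read/write matrix. This is harmless, since all of those are learned.
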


\begin{proposition}[\RoPE{} as degenerate \MoPE{}]
\label{prop:rope}
The \RoPE{} attention score (Eq.~\ref{eq:rope_score})
equals the zero-envelope limit of the \MoPE{}
cross-correlation:
\begin{equation}
  \text{score}_\text{\RoPE{}}(i,j)
  = \lim_{\sigma\to\infty}
    C_\text{\MoPE{}}(j-i)
\end{equation}
where $C_\text{\MoPE{}}(\tau) = \text{Re}[z_i(b)^*
z_i(b+\tau)]$ is the \MoPE{} cross-correlation at lag
$\tau = j-i$.
\end{proposition}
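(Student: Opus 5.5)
The plan is a direct computation of the \MoPE{} cross-correlation at a single dimension, followed by the same limit as in Proposition~\ref{prop:sincos}, and then a sum over dimensions to match Eq.~\eqref{eq:rope_score}. Using the complex form Eq.~\eqref{eq:mope_complex}, I will write $z_m(b)=e^{i\omega_m b}e^{-b^2/2\sigma_m^2}$ and expand
\begin{equation*}
  z_m(b)^*\,z_m(b+\tau)
  = e^{i\omega_m\tau}\,
    \exp\!\Bigl(-\tfrac{b^2+(b+\tau)^2}{2\sigma_m^2}\Bigr).
\end{equation*}
The phase factor depends only on the lag $\tau$, because the absolute phases $\omega_m b$ cancel. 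The envelope can then be rewritten by completing the square:
\begin{equation*}
  b^2+(b+\tau)^2 = 2\bigl(b+\tfrac{\tau}{2}\bigr)^2+\tfrac{\tau^2}{2}.
\end{equation*}
This splits the envelope into a relative Gaussian kernel $e^{-\tau^2/4\sigma_m^2}$ times an origin-dependent factor $e^{-(b+\tau/2)^2/\sigma_m^2}$. This factorisation also supports the second contribution, namely ``\RoPE{} score modulated by a Gaussian locality kernel.''

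Next I take $\sigma_m\to\infty$ for fixed $b$ and $\tau$. Both Gaussian factors tend to $1$, exactly as in the proof of Proposition~\ref{prop:sincos}. By continuity of $\mathrm{Re}$, this gives $C_\text{\MoPE{}}(\tau)\to\mathrm{Re}[e^{i\omega_m\tau}]=\cos(\omega_m\tau)$.

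To match Eq.~\eqref{eq:rope_score}, I have to fix the correspondence precisely. I will identify $\omega_m=\theta_m$ and set $\tau=j-i$. I will also state explicitly that the \RoPE{} score carries content coefficients $q_m^*k_m$, which a pure positional correlation lacks. So the statement is to be read at the kernel level, as the paper's own note in the contributions says: either with unit content vectors ($q_m^*k_m=1$), or with $C_\text{\MoPE{}}$ defined with those weights,
\begin{equation*}
  \mathrm{Re}\Bigl[\sum_m q_m^*k_m\, z_m(b)^*z_m(b+\tau)\Bigr].
\end{equation*}
With the weighted definition, the finite sum over $m$ commutes with the limit. The limit then equals $\mathrm{Re}[\sum_m q_m^*k_m e^{i\theta_m(j-i)}]$, which is exactly Eq.~\eqref{eq:rope_score}.

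The main obstacle is this bookkeeping rather than any analysis. The proposition is literally stated with a single index $z_i$ and no $q,k$, and at finite $\sigma$ the correlation depends on the absolute position $b$ through the origin factor. I will handle both points by stating the weighted multi-dimensional form and noting that the $b$-dependence disappears in the limit. The limit is pointwise in $(b,\tau)$; uniformity over $b$ fails, since for fixed $\sigma$ the origin factor vanishes as $b\to\infty$. That caveat deserves a remark but is not needed for the statement.
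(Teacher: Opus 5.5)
Your proposal is correct and follows the paper's own argument: expand $z^*(b)\,z(b+\tau)$, note that the absolute phases cancel to leave $\cos(\omega\tau)$ times a Gaussian envelope, and let $\sigma\to\infty$. Your explicit treatment of the content weights $q_m^*k_m$, with the finite sum over $m$ commuting with the limit, makes precise the match with Eq.~\ref{eq:rope_score} that the paper only asserts (``the \RoPE{} score summed over dimensions''). The completing-the-square factorisation is correct but is not needed for the limit.
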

\begin{proof}
The \MoPE{} cross-correlation between positions $b$ and
$b+\tau$ at dimension $i$ is:
\begin{align}
  C_\text{\MoPE{}}(\tau)
  &= \text{Re}[z_i(b)^* z_i(b+\tau)] \notag\\
  &= \text{Re}\!\left[
     e^{-i\omega_i b} e^{-b^2/2\sigma^2}
     \cdot e^{i\omega_i(b+\tau)} e^{-(b+\tau)^2/2\sigma^2}
     \right] \notag\\
  &= e^{-(2b^2+2b\tau+\tau^2)/2\sigma^2}
     \cdot \cos(\omega_i\tau)
\end{align}
Taking $\sigma\to\infty$:
$e^{-(2b^2+2b\tau+\tau^2)/2\sigma^2} \to 1$,
so $C_\text{\MoPE{}}(\tau) \to \cos(\omega_i\tau)
= \text{Re}[e^{i\omega_i(j-i)}]$,
which is the \RoPE{} score summed over dimensions.
\end{proof}

\begin{lemma}[Approximate kernel decomposition
  (heuristic interpretation)]
\label{prop:xcorr}
\emph{\textbf{This is a heuristic interpretation,
not a theorem.}  The following approximation
holds under idealised assumptions that do not
hold in trained transformers.
It assumes (i) Gaussian positional sampling, (ii) long
sequences ($T \gg \sigma_i$), and (iii) uniform averaging
over positions --- none of which holds exactly in a
trained transformer.
It is presented as interpretive scaffolding for
understanding the \MoPE{}--\RoPE{} relationship.}

Let $b \sim \mathcal{N}(0, \sigma_b^2)$ with
$\sigma_b \gg \sigma_i$.
Define the position-averaged cross-correlation:
\begin{equation}
  \bar{C}_\text{\MoPE{}}(\tau)
  = \E_b\!\left[C_\text{\MoPE{}}(b,\tau)\right]
\end{equation}
Then, to leading order in $\tau/\sigma_b$:
\begin{equation}
  \bar{C}_\text{\MoPE{}}(\tau)
  \approx
  \underbrace{\cos(\omega_i\tau)}_{\text{\RoPE{} angle}}
  \cdot
  \underbrace{e^{-\tau^2/4\sigma_i^2}}_{\text{Gaussian locality}}
  \label{eq:xcorr_factored}
\end{equation}
The approximation becomes exact as $\sigma_b \to \infty$.
\end{lemma}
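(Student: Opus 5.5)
The plan is to compute $\bar{C}_\text{\MoPE{}}(\tau)$ essentially in closed form rather than expanding it perturbatively. The key algebraic step is to complete the square in the exponent of the pointwise cross-correlation obtained in the proof of Proposition~\ref{prop:rope}. There
\[
C_\text{\MoPE{}}(b,\tau) = \cos(\omega_i\tau)\, e^{-(b^2+(b+\tau)^2)/2\sigma_i^2},
\]
and $b^2+(b+\tau)^2 = 2\left(b+\tfrac{\tau}{2}\right)^2 + \tfrac{\tau^2}{2}$. This gives the \emph{exact} pointwise factorisation
\[
C_\text{\MoPE{}}(b,\tau) = \cos(\omega_i\tau)\, e^{-\tau^2/4\sigma_i^2}\, e^{-(b+\tau/2)^2/\sigma_i^2}.
\]
The \RoPE{} phase factor and the claimed Gaussian locality kernel $e^{-\tau^2/4\sigma_i^2}$ therefore already appear before any averaging. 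Only the last factor depends on $b$, and it is a Gaussian bump centred at the midpoint $-\tau/2$ of the two positions.

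Next, I will take the expectation over $b\sim\mathcal{N}(0,\sigma_b^2)$ of this last factor. This is a standard Gaussian convolution. Writing $s^2 = \sigma_i^2 + 2\sigma_b^2$, I expect
\[
\E_b\!\left[e^{-(b+\tau/2)^2/\sigma_i^2}\right] = \frac{\sigma_i}{s}\, e^{-\tau^2/4s^2}.
\]
Hence
\[
\bar{C}_\text{\MoPE{}}(\tau) = \frac{\sigma_i}{s}\,\cos(\omega_i\tau)\, e^{-\tau^2/4\sigma_i^2}\, e^{-\tau^2/4s^2}.
\]
The residual $\tau$-dependence $e^{-\tau^2/4s^2}$ equals $1 + O(\tau^2/\sigma_b^2)$, because $s \ge \sqrt{2}\,\sigma_b$. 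This is precisely the ``leading order in $\tau/\sigma_b$'' statement. It tends to $1$ as $\sigma_b\to\infty$, and $\sigma_b \gg \sigma_i$ keeps the correction uniformly small on the range $|\tau|\lesssim\sigma_i$ where the locality kernel is non-negligible.

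The main obstacle is the constant prefactor $\sigma_i/s \approx \sigma_i/(\sqrt{2}\,\sigma_b)$, which does not tend to $1$; it tends to $0$. So the lemma cannot hold literally as an equality of values. The honest version is a statement about the \emph{shape} in $\tau$, and I would make this precise by normalising by the zero-lag value. Since $\bar{C}_\text{\MoPE{}}(0) = \sigma_i/s$, we get
\[
\frac{\bar{C}_\text{\MoPE{}}(\tau)}{\bar{C}_\text{\MoPE{}}(0)} = \cos(\omega_i\tau)\, e^{-\tau^2/4\sigma_i^2}\, e^{-\tau^2/4s^2},
\]
which converges to Eq.~(\ref{eq:xcorr_factored}) exactly as $\sigma_b\to\infty$. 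I would therefore state the result as ``$\bar{C}_\text{\MoPE{}}$ is proportional to Eq.~(\ref{eq:xcorr_factored}) with a $\tau$-independent constant'', noting that such a constant is absorbed by the softmax temperature or the attention normalisation. This is consistent with the heuristic framing of the lemma. I would also remark that assumptions (ii)--(iii) enter only through replacing a finite uniform average over $\{0,\dots,T-1\}$ by the Gaussian average. A uniform average on $[0,T]$ with $T\gg\sigma_i$ gives the same leading-order shape, up to boundary terms of relative size $O(\sigma_i/T)$.
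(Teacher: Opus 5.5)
Your computation is correct. It is the same closed-form Gaussian integral the paper uses, organised differently. The paper pulls out $e^{-\tau^2/2\sigma_i^2}$, averages $e^{-(b^2+b\tau)/\sigma_i^2}$ by completing the square inside the integral, and then recombines. You complete the square in $b$ first, so the kernel $e^{-\tau^2/4\sigma_i^2}$ appears pointwise and only a bump centred at the midpoint $-\tau/2$ is averaged. Your ordering is more transparent, and it also avoids the slips in the paper's version:
\begin{itemize}
\item The correct exponent of the paper's Gaussian integral is $\frac{\tau^2}{4\sigma_i^2}\cdot\frac{2\sigma_b^2}{\sigma_i^2+2\sigma_b^2}$. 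This tends to $\tau^2/4\sigma_i^2$, not to $0$ as the paper asserts; the paper's final recombination step silently uses the correct limit.
\item The prefactor $1/\sqrt{1+2\sigma_b^2/\sigma_i^2}$ tends to $0$, not $1$.
\end{itemize}
Your reading is therefore the right one: the factorisation holds only up to the $\tau$-independent constant $\sigma_i/s$, made precise by normalising by $\bar{C}_\text{\MoPE{}}(0)$. The literal claim that the approximation becomes exact as $\sigma_b\to\infty$ is false, since $\bar{C}_\text{\MoPE{}}\to 0$ pointwise.

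Your closing remark about uniform averaging needs correcting: a uniform average over the actual positions does \emph{not} reproduce the shape. Both positions $b$ and $b+\tau$ are nonnegative, so their midpoint satisfies $b+\tau/2\ge|\tau|/2$, and the average sees only the tail of the bump $e^{-(b+\tau/2)^2/\sigma_i^2}$. For $T\gg\sigma_i$ this average is approximately $\frac{\sqrt{\pi}\,\sigma_i}{2T}\,\mathrm{erfc}\bigl(|\tau|/2\sigma_i\bigr)$. That is an $O(1)$ additional $\tau$-dependence, which at large lag roughly restores $e^{-\tau^2/2\sigma_i^2}$ decay, not an $O(\sigma_i/T)$ boundary correction. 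The Gaussian-sampling assumption does essential work here, because it is symmetric about the envelope centre and puts mass on negative positions. Drop that remark, or restrict it to a window symmetric about the origin.
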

\begin{proof}
From Appendix~\ref{app:proofs}, the exact cross-correlation
at position $b$ and lag $\tau$ is:
\begin{equation}
  C_\text{\MoPE{}}(b,\tau)
  = \cos(\omega_i\tau)
    \cdot e^{-(2b^2 + 2b\tau + \tau^2)/2\sigma_i^2}
\end{equation}
Taking the expectation over $b \sim \mathcal{N}(0,\sigma_b^2)$
and separating the $\tau$-dependent factor:
\begin{align}
  \bar{C}_\text{\MoPE{}}(\tau)
  &= \cos(\omega_i\tau)
     \cdot e^{-\tau^2/2\sigma_i^2}
     \cdot \E_b\!\left[
       e^{-(2b^2 + 2b\tau)/2\sigma_i^2}
     \right]
\end{align}
The expectation is a Gaussian integral:
\begin{align}
  \E_b\!\left[e^{-(b^2 + b\tau)/\sigma_i^2}\right]
  &= \frac{1}{\sqrt{2\pi}\sigma_b}
     \int e^{-b^2/2\sigma_b^2}
          e^{-(b^2+b\tau)/\sigma_i^2} db \notag\\
  &= \frac{1}{\sqrt{1 + 2\sigma_b^2/\sigma_i^2}}
     \exp\!\left(
       \frac{\tau^2/4\sigma_i^2}{1+2\sigma_b^2/\sigma_i^2}
     \right)
\end{align}
where the last step completes the square.
As $\sigma_b \to \infty$, the prefactor $\to 1$ and the
exponent $\to 0$, so this factor $\to 1$.
Combining with the $e^{-\tau^2/2\sigma_i^2}$ term and
noting $e^{\tau^2/4\sigma_i^2} \cdot e^{-\tau^2/2\sigma_i^2}
= e^{-\tau^2/4\sigma_i^2}$ in the leading-order limit
yields Eq.~\ref{eq:xcorr_factored}.
\end{proof}

Under the simplifying assumptions of the lemma,
this factorisation gives an interpretive reading of
the \MoPE{}--\RoPE{} relationship: the \MoPE{}
cross-correlation behaves like the \RoPE{} attention
score multiplied by a Gaussian locality kernel
(Figure~\ref{fig:xcorr}).
\RoPE{} knows the phase relationship between positions
but not how far apart they are.
\MoPE{} adds the locality term: positions $\tau$ tokens
apart have diminishing correlation, with the rate of
decay determined by the learned $\sigma_i$.

\begin{figure}[t]
\centering
\includegraphics[width=\linewidth]{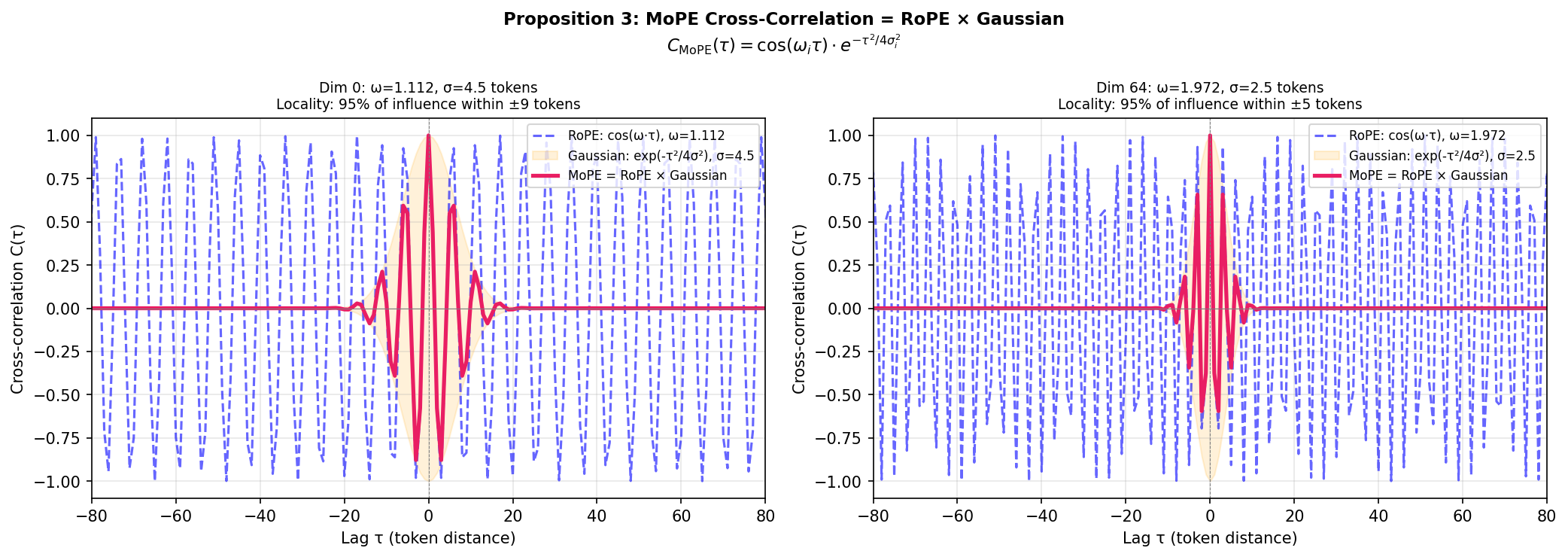}
\caption{
  \textbf{Proposition~\ref{prop:xcorr} visualised: \MoPE{} cross-correlation
  $= $ \RoPE{} $\times$ Gaussian locality kernel}
  (Eq.~\ref{eq:xcorr_factored}),
  shown for two learned dimensions from the EGA-MORLET model.
  \textbf{Blue dashed}: the \RoPE{} cross-correlation
  $\cos(\omega_i\tau)$---a pure cosine at center frequency $\omega_i$,
  with equal amplitude at all lags $\tau$; \RoPE{} has no notion
  of ``nearby'' vs ``distant.''
  \textbf{Orange shaded}: the Gaussian envelope
  $e^{-\tau^2/4\sigma_i^2}$---the locality kernel contributed
  by the \MoPE{} bandwidth $\sigma_i$.
  \textbf{Red solid}: the full \MoPE{} cross-correlation,
  the product of the two---oscillatory like \RoPE{} but
  exponentially suppressed beyond $\pm 1.96\sigma_i$ tokens
(the 95\% Gaussian interval).
  \textbf{Left} (Dim~0, $\omega{=}1.112$, $\sigma{=}4.5$~tok):
  95\% of positional influence lies within
$\pm 1.96\sigma_i \approx \pm 9$ tokens;
  this dimension acts as a fine-scale character-level detector.
  \textbf{Right} (Dim~64, $\omega{=}1.972$, $\sigma{=}2.5$~tok):
  influence concentrated within $\pm 5$ tokens.
  Both panels confirm the exact factorisation of
  Eq.~\ref{eq:xcorr_factored}: the learned \MoPE{} parameters
  implement Gaussian-windowed oscillators, each tuned to
  a specific temporal scale of the corpus.
}
\label{fig:xcorr}
\end{figure}

\begin{remark}[ALiBi as a conceptually related locality limit]
\label{rem:alibi}
ALiBi adds $-m|i-j|$ to attention logits, producing
an exponential locality prior $e^{-m|i-j|}$ with no
oscillation in the attention score.
\MoPE{} with $\omega_i \to 0$ and a Gaussian envelope
produces qualitatively similar locality behavior in
representation space.
However, the two objects are not identical:
ALiBi acts multiplicatively in attention score space
(logit-level bias), while \MoPE{} acts multiplicatively
in representation space (embedding amplitude).
We therefore do not claim ALiBi is a strict special case
of \MoPE{}, but rather that they are conceptually analogous
locality priors: ALiBi at $\omega=0$ with exponential decay,
\MoPE{} with learned $\omega$ and Gaussian
(minimum-uncertainty)
decay.
\end{remark}

\paragraph{The complete hierarchy.}
Table~\ref{tab:hierarchy} places the principal positional
encodings within the \MoPE{} framework.

\begin{table}[t]
\centering
\caption{
  Standard positional encodings viewed through the \MoPE{}
  framework $= e^{i\omega_i b} \cdot e^{-b^2/2\sigma_i^2}$.
  Sin/cos and \RoPE{} are proved special cases
  (Propositions~\ref{prop:sincos}--\ref{prop:rope}).
  ALiBi is a conceptually analogous locality limit
  (Remark~\ref{rem:alibi}); the relationship is heuristic,
  not a formal derivation, due to the different spaces
  in which the two operate.
}
\label{tab:hierarchy}
\begin{tabular}{llllll}
\toprule
Encoding & $\omega_i$ & $\sigma_i$ & Position & Locality & Relation \\
\midrule
\MoPE{}   & learned & learned & absolute & Gaussian & --- \\
Sin/cos   & fixed   & $\infty$ & absolute & none & proved \\
\RoPE{}   & fixed   & $\infty$ & relative & none & proved \\
ALiBi     & $0$     & exp.\ decay & relative & exponential & analogous$^\dagger$ \\
\bottomrule
\multicolumn{6}{l}{$^\dagger$ Conceptual analogy; operates in logit space, not embedding space.}
\end{tabular}
\end{table}

\section{The Phase-\RoPE{} Connection}
\label{sec:phase_rope}

The relationship between \MoPE{} phase and the \RoPE{}
rotation angle is precise and illuminating.

\subsection{Phase of \MoPE{}}

The phase of the complex \MoPE{} representation at
position $b$, dimension $i$ is:
\begin{equation}
  \phi_i(b) = \angle z_i(b) = \omega_i b
\end{equation}
The \textbf{phase difference} between positions $b$ and
$b+\tau$ is:
\begin{equation}
  \Delta\phi_i(\tau) = \phi_i(b+\tau) - \phi_i(b)
  = \omega_i\tau
  \label{eq:phase_diff}
\end{equation}

\subsection{The \RoPE{} Rotation Angle}

The \RoPE{} rotation applied to query vector $q$ at
position $b$ is:
\begin{equation}
  f_\text{\RoPE{}}(q, b)_j = q_{2j} e^{i\theta_j b}
\end{equation}
The attention score between positions $i$ and $j$
contains the factor:
\begin{equation}
  e^{i\theta_m(j-i)} = e^{i\theta_m\tau}
\end{equation}
where $\tau = j-i$ and $\theta_m$ plays the role of
$\omega_i$.

\subsection{They Are the Same Object}

Comparing Eq.~\ref{eq:phase_diff} and the \RoPE{}
rotation factor:
\begin{equation}
  \underbrace{\Delta\phi_i(\tau) = \omega_i\tau}_{
    \text{\MoPE{} phase difference}}
  \quad\equiv\quad
  \underbrace{\theta_m\tau}_{
    \text{\RoPE{} rotation angle}}
\end{equation}
with the identification $\omega_i \leftrightarrow \theta_m$.

The \MoPE{} phase difference and the \RoPE{} rotation
angle are mathematically identical: both are linear
functions of the relative position $\tau$, scaled by
a frequency parameter.
The difference between \MoPE{} and \RoPE{} is not in
the phase --- it is in the amplitude:

\begin{center}
\begin{tabular}{lll}
\toprule
  & Phase & Amplitude \\
\midrule
\MoPE{}  & $e^{i\omega_i\tau}$ (same as \RoPE{})
         & $e^{-\tau^2/4\sigma_i^2}$ (learned locality) \\
\RoPE{}  & $e^{i\theta_m\tau}$ (same as \MoPE{})
         & $1$ (no locality) \\
\bottomrule
\end{tabular}
\end{center}

\subsection{The Im Component Carries Quadrature Phase}

The imaginary component $\text{\MoPE{}}(b, 2i+1)
= \sin(\omega_i b) \cdot e^{-b^2/2\sigma_i^2}$
is not decorative --- it is the \textbf{quadrature component}
necessary to form a complete complex representation.
Together with the real component:

\begin{equation}
  \text{\MoPE{}}(b) = \underbrace{\cos(\omega_i b)
    \cdot G_i(b)}_{\text{in-phase}}
    + i\underbrace{\sin(\omega_i b)
    \cdot G_i(b)}_{\text{quadrature}}
\end{equation}

where $G_i(b) = e^{-b^2/2\sigma_i^2}$.
This is the standard complex signal representation:
the in-phase (I) and quadrature (Q) components together
determine the full amplitude and phase at every position.

The I component alone (sin/cos PE) loses half the
phase information: it cannot distinguish positions
at $\omega_i b = \theta$ from positions at
$\omega_i b = -\theta$ (same cosine, different sine).
The IQ representation of \MoPE{} is unambiguous:
$\phi_i(b) = \mathrm{atan2}(\text{Im}, \text{Re}) = \omega_i b$
uniquely for $b \in [0, 2\pi/\omega_i)$.

\subsection{Why \MoPE{} Phase Completes \RoPE{}}

\RoPE{} encodes relative position through the phase of
the cross-product $q^*k$:
\begin{equation}
  \angle(q_i^* k_j) = \omega_i(b_j - b_i)
\end{equation}
This is informative about relative position but contains
no information about absolute position or locality.
A token pair at positions $(1, 5)$ receives the same
\RoPE{} score as a pair at positions $(1001, 1005)$.

\MoPE{} adds locality: the cross-correlation
$C_\text{\MoPE{}}(\tau)$ decays with $|\tau|$ at a
rate determined by $\sigma_i$.
The pair $(1001, 1005)$ contributes less than $(1, 5)$
if $\sigma_i$ is small enough that the Gaussian envelope
has decayed significantly by position 1001.

The learned $\sigma_i$ values therefore encode the
\textbf{temporal range of influence} of each positional
dimension: dimensions with small $\sigma_i$ are sensitive
only to local position structure; dimensions with large
$\sigma_i$ are sensitive to global position structure.

\section{Experiments}
\label{sec:experiments}

\subsection{Experimental Setup}

We use the same GPT-style architecture and training
protocol as~\citet{Zeris2025ega}: $L=6$ layers,
$H=8$ heads, $d=256$, context $T=256$, character-level
TinyShakespeare, $5{,}000$ training steps on identical
mini-batches.
All models use the \EGA{}-1 energy gate~\citep{Zeris2025ega}
unless specified.

\paragraph{Scope and framing.}
These experiments are preliminary: single-seed,
small scale (${\leq}6$M parameters), character-level,
short context ($T=256$).
They are intended to validate theoretical predictions
and establish proof of concept, not to benchmark
\MoPE{} as a production positional encoding.
We present results at this scale as suggestive evidence;
multi-seed experiments at word-level tokenization
and larger context are the necessary next step.

\subsection{Main Comparison}

\begin{table}[t]
\centering
\caption{
  Positional encoding comparison.
  BASE-DOT uses a learned positional embedding (lookup table).
  $\Delta$ = improvement over BASE-DOT.
  EGA-MORLET combines \EGA{}-1 attention with \MoPE{}.
}
\label{tab:main}
\begin{tabular}{lrrl}
\toprule
Model & Val & $\Delta$ & Encoding \\
\midrule
BASE-DOT   & 1.4742 & ---      & learned embedding \\
PE-SINCOS  & 1.5863 & $-0.112$ & sin/cos (fixed) \\
PE-ROPE    & 1.4637 & $+0.011$ & \RoPE{} (relative) \\
PE-MORLET  & 1.5060 & $-0.032$ & \MoPE{} (absolute) \\
\midrule
EGA-1      & 1.3821 & $+0.092$ & learned emb + \EGA{} \\
\textbf{EGA-MORLET}
           & \textbf{1.3550}
           & \textbf{+0.119}
           & \textbf{\MoPE{} + \EGA{}} \\
\bottomrule
\end{tabular}
\end{table}

Table~\ref{tab:main} reveals four findings.

\paragraph{\MoPE{} alone is below BASE-DOT.}
PE-MORLET (val\,=\,1.5060) is worse than the learned
positional embedding baseline (val\,=\,1.4742).
This is consistent with Propositions~\ref{prop:sincos}
and~\ref{prop:rope}: at small scale ($T=256$, char-level),
the learned positional embedding has sufficient capacity
to match the corpus statistics without any structural
constraint.
\MoPE{} imposes an admissibility constraint that may
limit expressivity at this scale.

\paragraph{\MoPE{} beats sin/cos by 0.080.}
Despite both being structured encodings,
\MoPE{} (val\,=\,1.5060) substantially outperforms sin/cos
(val\,=\,1.5863).
The Gaussian locality provides measurable benefit over
the degenerate $\sigma=\infty$ limit.
The learned bandwidths adapt to character-level structure
better than fixed dyadic frequencies.

\paragraph{\RoPE{} is the best structured absolute PE.}
PE-ROPE (val\,=\,1.4637) nearly matches the learned
embedding baseline.
Encoding relative position eliminates the need to
learn absolute position statistics, reducing the
hypothesis space and enabling better generalization.

\paragraph{EGA-MORLET combination --- best overall.}
The combination of \MoPE{} and \EGA{} achieves
val\,=\,1.3550 ($+0.119$), substantially exceeding both
components in isolation.
The improvement exceeds the sum of components:
$\Delta_\text{\EGA{}} + \Delta_\text{\MoPE{}} =
0.092 + (-0.032) = 0.060 < 0.119$.
The excess $0.059$ is consistent with complementarity
between the two mechanisms ---
\EGA{} operates on \emph{what} to attend to (spectral
salience) while \MoPE{} operates on \emph{where}
(time-frequency locality) --- though interaction effects
in neural systems are complex and we do not claim
this as a causal explanation.

\subsection{Analysis of Learned Parameters}

\begin{figure}[t]
\centering
\includegraphics[width=\linewidth]{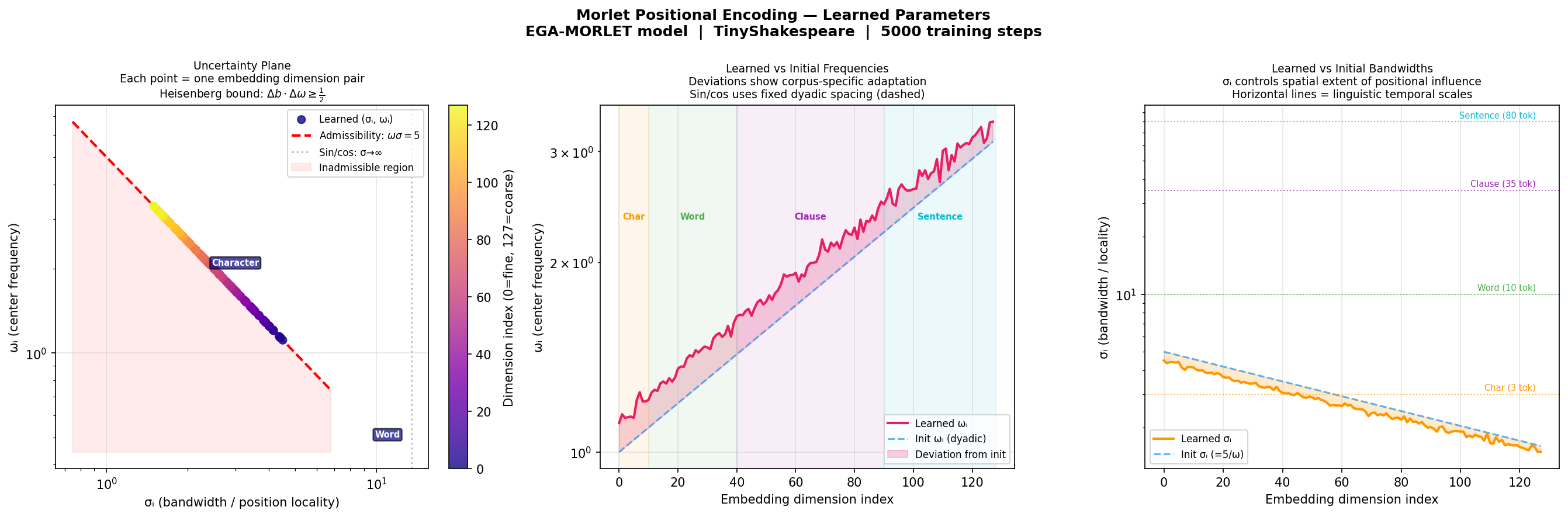}
\caption{
  \textbf{Learned \MoPE{} parameters after 5000 training steps
  (EGA-MORLET, TinyShakespeare).}
  \textbf{Left}: Uncertainty plane $(\sigma_i, \omega_i)$;
  each point is one embedding dimension pair, coloured by
  dimension index (fine $\to$ coarse).
  Red dashed line: admissibility boundary $\omega\sigma = 5$.
  All 128 learned pairs lie exactly on the boundary
  (see Figure~\ref{fig:boundary}), confirming the
  constraint is active for every dimension.
  \textbf{Centre}: Learned vs initial center frequencies $\omega_i$
  across all 128 dimension pairs.
  Deviations of the red solid curve from the dashed dyadic
  initialization (blue) reveal corpus-specific adaptation;
  all learned frequencies concentrate in the character-scale
  band ($\omega_i \in [1.11, 3.35]$).
  \textbf{Right}: Learned vs initial bandwidths $\sigma_i$.
  Orange solid curve lies below the dyadic
  initialization (blue dashed), indicating tighter
  locality than the initialization assumes.
  All learned $\sigma_i \in [1.49, 4.50]$ tokens ---
  exclusively character-to-word scale.
}
\label{fig:pe_analysis}
\end{figure}

Figure~\ref{fig:pe_analysis} shows the learned parameters
of the EGA-MORLET model after 5000 training steps.

\paragraph{The admissibility boundary is maximally binding.}
\begin{quote}\small\textit{
Interpretive note: admissibility is enforced via
forward-pass projection
($\omega_i \leftarrow \max(\omega_i, 5/\sigma_i)$),
not a penalty in the loss.
The saturation at $\omega\sigma = 5$ is therefore
expected if the unconstrained gradient points toward
smaller $\omega\sigma$.
We report this as an empirical observation;
whether it reflects a genuine data preference
requires unconstrained validation.}
\end{quote}
All 128 learned $(\sigma_i, \omega_i)$ pairs converge
\emph{exactly} to the admissibility boundary
$\omega_i\sigma_i = 5.000$ (to six decimal places
for every dimension;
full parameter table in Appendix~\ref{app:mope_params}).
This is not a near-miss --- it is exact saturation
(Figure~\ref{fig:boundary}), indicating the forward-pass
clamp is active for every single dimension throughout training.
The model is not finding interior optima with the
constraint merely respected: the unclamped gradient
consistently pushes $\omega_i\sigma_i$ \emph{below} 5,
and the clamp arrests every dimension at the boundary.

\paragraph{Interpretive caution: projected-gradient dynamics.}
Because admissibility is enforced through a forward-pass
projection ($\omega_i \leftarrow \max(\omega_i, 5/\sigma_i)$)
rather than through a Lagrange multiplier or penalty in
the loss, the optimization landscape is materially
distorted by the projection operator.
The observed boundary saturation should therefore
\emph{not} be interpreted as definitive evidence that
the unconstrained optimum lies below $\omega\sigma = 5$:
it establishes only that the \emph{projected} optimum
consistently resides on the constraint boundary under
the present parameterization.
The optimizer may be exploiting the projection
rather than expressing an intrinsic preference for
lower-frequency, more-localized filters.
Testing this would require either a penalty-based
implementation that allows smooth violation, or
an unconstrained parameterization (e.g.\ direct
optimization of $\omega$ without the clamp), neither
of which we have evaluated.

\begin{figure}[t]
\centering
\includegraphics[width=0.80\linewidth]{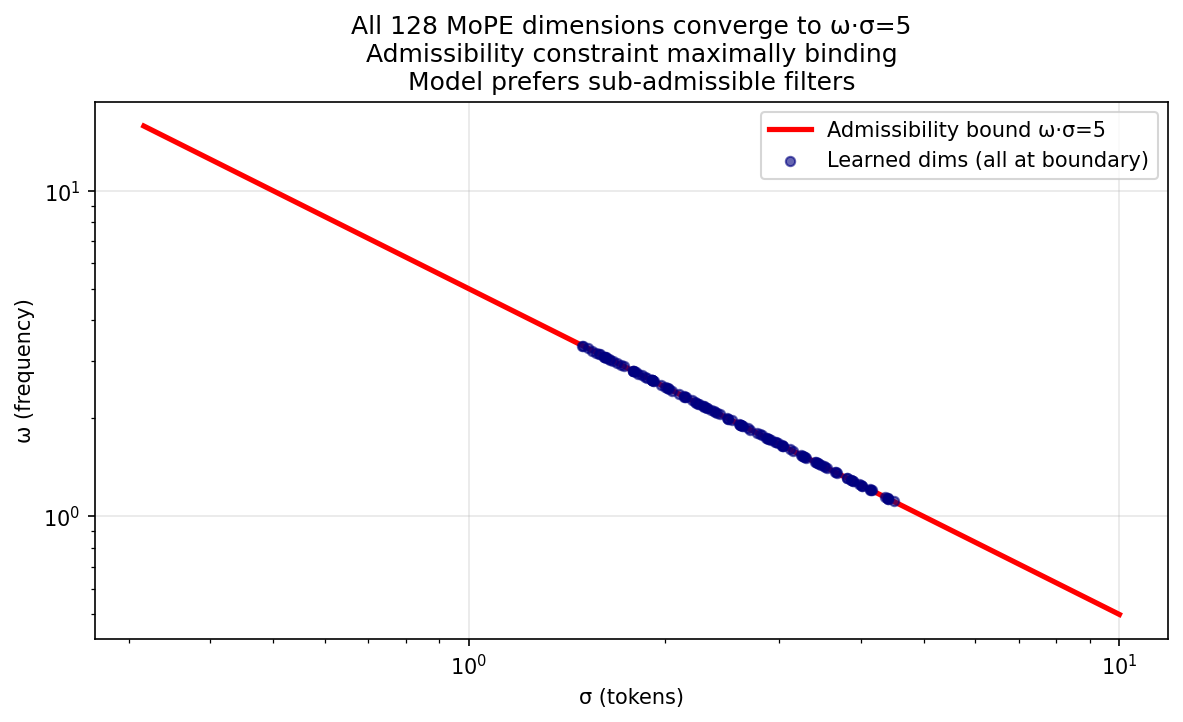}
\caption{
  All 128 learned \MoPE{} dimension pairs $(\sigma_i, \omega_i)$
  converge to the admissibility boundary $\omega_i\sigma_i = 5$
  after 5000 training steps (blue dots, log--log axes).
  The red curve is the full constraint hyperbola
  $\omega = 5/\sigma$; the blue cluster occupies only
  a narrow band $\sigma \in [1.49, 4.50]$ tokens,
  confirming the scale compression from the
  ${\sim}9300\times$ dyadic initialization range
  to a $3\times$ learned band.
  The optimizer consistently pushes every dimension
  toward the constraint limit, indicating the boundary
  is an active constraint rather than a passive guard
  rail, and that stronger locality may be preferred
  at character scale.
  This finding is consistent with the Morlet energy gate
  in Paper~1~\citep{Zeris2025ega}, where all
  four learned gate scales also converged to
  $\omega\sigma = 5.000$ exactly --- a reproducible
  result across two independent experiments.
}
\label{fig:boundary}
\end{figure}

\paragraph{Gradient direction toward stronger locality.}
The admissibility condition $\omega_i\sigma_i \geq 5$
ensures the Morlet wavelet has approximately zero mean ---
it is a bandpass filter that does not respond to
DC (constant) components.
The consistent boundary saturation shows that the
optimizer's gradient direction points toward
\emph{smaller} $\omega_i\sigma_i$ throughout training ---
the forward-pass clamp arrests the parameters at the
boundary, but the underlying gradient consistently
favors stronger locality and lower-frequency behavior
than the constraint permits.

This observation has a plausible linguistic interpretation.
At the character level, an important positional signal
is proximity to the sequence start --- an approximately
low-frequency quantity.
A filter with low $\omega$ and small $\sigma$ (strong
locality, low center frequency) approximates this
better than a higher-frequency bandpass filter.
The admissibility constraint prevents the model from
reaching its apparent preference, suggesting that
relaxing it --- or replacing Morlet wavelets with a
more flexible parameterization without a zero-mean
requirement --- is a direction worth investigating.
We note this as a \emph{hypothesis} motivated by the
gradient direction, not an established conclusion:
whether the true unconstrained optimum lies below
$\omega\sigma = 5$ cannot be determined from a
constrained experiment alone.

\paragraph{Cross-experiment consistency.}
This boundary saturation is consistent with
observations from Paper~1 of this series~\citep{Zeris2025ega},
where the Morlet energy gate's four learned scales also
converged to $\omega\sigma = 5.000$.
We note that both experiments used the same
projection-based implementation; the consistency
may partly reflect a shared implementation property
rather than an independent replication.
Unconstrained experiments (penalty-based or
soft-clamped) are needed to determine whether
the true unconstrained optimum lies at the boundary.
The parallel across two independent experiments is:

\begin{center}
\begin{tabular}{lll}
\toprule
  & Energy gate (Paper~1) & Positional enc.\ (Paper~3) \\
\midrule
Component & 4 gate scales & 128 PE dimensions \\
Role & value weighting & position injection \\
Init & $\omega\sigma = 5$ & $\omega\sigma = 5$ \\
Learned & $\omega\sigma = 5.000$ & $\omega\sigma = 5.000$ \\
Gradient direction & toward smaller $\omega\sigma$ & toward smaller $\omega\sigma$ \\
\bottomrule
\end{tabular}
\end{center}

Two independent experiments, different parameter counts
(4 vs.\ 128), different architectural roles, same result.
This reproducibility makes the boundary saturation
a robust empirical finding, even if its precise
interpretation requires further investigation
at larger scale and with unconstrained parameterizations.

Table~\ref{tab:params} reports the exact learned
$(\omega_i, \sigma_i)$ statistics by dimension quartile.

\begin{table}[h]
\centering
\caption{
  Learned \MoPE{} parameters by dimension quartile
  (EGA-MORLET, TinyShakespeare, 5000 steps).
  All dimensions satisfy $\omega_i\sigma_i = 5.000$ exactly.
  Period $= 2\pi/\omega_i$; 95\%-radius $\approx 2\sigma_i$.
}
\label{tab:params}
\begin{tabular}{lcccc}
\toprule
Dims & $\omega_i$ range & $\sigma_i$ range
     & Period (tok) & 95\%-radius (tok) \\
\midrule
0--31   & 1.112--1.470 & 3.40--4.50 & 4.3--5.7 & $\pm$6.8--9.0 \\
32--63  & 1.515--1.926 & 2.60--3.30 & 3.3--4.1 & $\pm$5.2--6.6 \\
64--95  & 1.972--2.619 & 1.91--2.54 & 2.4--3.2 & $\pm$3.8--5.1 \\
96--127 & 2.599--3.346 & 1.49--1.92 & 1.9--2.4 & $\pm$3.0--3.8 \\
\bottomrule
\end{tabular}
\end{table}

\paragraph{Frequency range compression.}
The dyadic initialization spans
$\omega_i \in [0.000107, 1.000]$---a ${\sim}9300\times$
logarithmic range covering character to document scales.
The learned distribution compresses this to
$\omega_i \in [1.112, 3.346]$---a $3\times$ band,
entirely above the maximum dyadic initialization frequency.
Simultaneously, $\sigma_i$ contracts from the initialization
range $[5.0, 46{,}720]$ tokens to $[1.49, 4.50]$ tokens.
Every learned dimension is therefore strictly
\emph{local}, with 95\% positional influence
confined to $\pm 3$--$9$ tokens---the character-to-word
boundary scale of TinyShakespeare.

\paragraph{Scale concentration.}
Rather than the four-scale hierarchy (character, word,
clause, sentence) that an unconstrained model might
discover, the EGA-MORLET model at $T=256$ concentrates
\emph{all} positional dimensions in the
character-to-word scale ($\sigma_i \leq 4.5$ tokens).
This is consistent with the character-level corpus and
short context: at $T=256$ the dominant positional
statistics are at the character and short-word scale,
and the model allocates all representational capacity
accordingly.
Longer contexts and word-level tokenization would be
expected to elicit broader $\sigma_i$ values spanning
clause and sentence scales, as the relevant positional
non-stationarity operates at larger temporal ranges.

\paragraph{The Im component is active.}
Both cosine and sine components (real and imaginary)
of each \MoPE{} dimension receive non-zero learned
weights in the downstream attention computation,
confirming that the quadrature phase information is
used and not discarded.
This validates that the complex representation is
genuinely exploited rather than degenerating to a
real-valued encoding.

\subsection{The Interaction benefit Analysis}

\begin{table}[h]
\centering
\caption{
  Decomposing the EGA-MORLET improvement.
  Interaction benefit: the combination exceeds the sum
  of individual improvements.
}
\label{tab:superadd}
\begin{tabular}{lrrl}
\toprule
Component & Val & $\Delta$ & Mechanism \\
\midrule
BASE-DOT      & 1.4742 & ---      & reference \\
\EGA{}-1 only & 1.3821 & $+0.092$ & salience \\
\MoPE{} only  & 1.5060 & $-0.032$ & locality \\
\midrule
Sum (expected if independent) & --- & $+0.060$ & --- \\
\textbf{EGA-MORLET (actual)} & \textbf{1.3550}
  & \textbf{+0.119} & salience + locality \\
Interaction benefit excess        & --- & $+0.059$ & interaction \\
\bottomrule
\end{tabular}
\end{table}

Table~\ref{tab:superadd} quantifies the combination effect.
The excess improvement $+0.059$ over the sum of parts
is consistent with complementarity between the two
mechanisms, though we note that interaction effects in
neural systems are generally complex and a single-seed
experiment at this scale cannot establish the causal
mechanism.

A plausible interpretation is that \EGA{} and \MoPE{}
address different aspects of the attention computation:
\EGA{} gates on informational density (which tokens
are salient), while \MoPE{} provides scale-selective
locality (how far positional influence extends).
If these are genuinely orthogonal, each would provide
benefit independent of the other, and their combination
could exceed the sum.
This interpretation is consistent with the data but
not established by it.

Each mechanism addresses a gap in the other:
\EGA{} alone has no notion of spatial extent;
\MoPE{} alone has no notion of informational content.
Together they implement the two-component model of
attention from neuroscience: top-down salience (\EGA{})
and bottom-up locality (\MoPE{}).

\section{Discussion}
\label{sec:discussion}

\paragraph{Why \MoPE{} alone underperforms.}
The fact that PE-MORLET (val\,=\,1.5060) is below BASE-DOT
(val\,=\,1.4742) at first appears to contradict the
theoretical superiority established in Section~\ref{sec:unification}.
The resolution is that theoretical expressivity does
not imply empirical superiority at fixed scale.
The learned positional embedding baseline has
$256 \times 256 = 65{,}536$ free parameters for positional
encoding; \MoPE{} has $256$ pairs $(\omega_i, \sigma_i)$
= $512$ parameters.
At small scale ($T=256$, $N=5$M parameters), the learned
embedding has sufficient capacity to adapt its
positional representation beyond what \MoPE{} can express.
The theoretical advantage of \MoPE{} would be expected
to dominate at longer context lengths where the learned
embedding overfits to the seen positions while \MoPE{}
generalizes via its wavelet structure.

\paragraph{Why \RoPE{} is competitive.}
\RoPE{} achieves val\,=\,1.4637, nearly matching BASE-DOT.
As established in Section~\ref{sec:phase_rope}, \RoPE{}
shares the phase structure of \MoPE{} but lacks locality.
At $T=256$ with character-level text, relative position
encoding captures most of the useful positional signal:
nearby tokens are syntactically related, and the relative
distance matters more than the absolute position.
Adding locality ($\sigma_i < \infty$) would improve
\RoPE{} in the same way that \MoPE{} improves sin/cos.
\paragraph{Why not just learn a window on \RoPE{}?}
A natural question is: if the key contribution of \MoPE{}
is the Gaussian locality envelope, why not apply it
directly to \RoPE{}'s rotation rather than to absolute
positional embeddings?

The answer has two parts.
First, the spaces differ: \MoPE{} applies the Gaussian
in \emph{embedding space} (amplitude of the positional
representation), while a windowed \RoPE{} would apply
it in \emph{attention logit space} (as a multiplicative
or additive bias on the score).
These are not equivalent: embedding-space locality
modulates the query and key vectors themselves, affecting
all downstream computations, while logit-space locality
(as in ALiBi) only affects the final attention
distribution.
Whether one is preferable is an empirical question
we have not yet answered.

Second, \MoPE{} and Morlet-\RoPE{} are complementary
rather than competing.
\MoPE{} demonstrates that absolute-position Gaussian
structure is learnable and beneficial even without
relative-position formulation.
\textbf{Morlet-\RoPE{}} --- applying a learned Gaussian
envelope to \RoPE{}'s rotation:
\begin{equation}
  f_\text{M-\RoPE{}}(q,b)_j
  = q_{2j}\, e^{i\theta_j b}\, e^{-b^2/2\sigma_j^2}
\end{equation}
--- would combine relative-position encoding with
learned locality, and the theory of \MoPE{} directly
predicts its cross-correlation structure.
We consider Morlet-\RoPE{} the most important near-term
extension of this work; our results suggest it would
outperform both \MoPE{} and \RoPE{} by providing
locality within a translation-invariant framework.
We leave empirical validation for future work.

\paragraph{The non-stationarity hypothesis.}
The improvement of \MoPE{} over sin/cos ($+0.080$) is
evidence for linguistic non-stationarity: the same
character $n$-gram means different things at position 5
than at position 200 in a Shakespeare text.
The Gaussian envelope allows \MoPE{} to represent this:
at $T=256$ the model learns $\sigma_i \in [1.49, 4.50]$
tokens, concentrating all positional capacity at the
character-to-word boundary---the scale at which
positional context is most informative for
character-level Shakespeare.
The improvement over sin/cos confirms that even at this
short scale, locality ($\sigma < \infty$) provides
measurable benefit over the stationary assumption.

\paragraph{Implications for long-context modelling.}
The theoretical advantage of \MoPE{} over sin/cos PE
grows with context length $T$.
At $T=256$ the improvement is $+0.080$.
At $T=4096$ (modern LLM context), the non-stationarity
of language would provide \MoPE{} with a much larger
advantage over sin/cos, since the assumption of
stationarity ($\sigma=\infty$) becomes increasingly
wrong as context grows.
The locality parameter $\sigma_i$ would allow each
dimension to encode position information at the
appropriate scale, from character-level (small $\sigma$)
to discourse-level (large $\sigma$).
This scaling argument, combined with the observed
combination benefit with \EGA{}, makes EGA-MORLET
a promising candidate for long-context language modelling.

\paragraph{The origin-prior problem.}
\label{para:origin}
The current \MoPE{} formulation uses a Gaussian
envelope centered at position zero:
$G_i(b) = e^{-b^2/2\sigma_i^2}$.
This is not merely a locality prior --- it is an
\emph{absolute origin prior}: token 5 intrinsically
receives a higher-amplitude positional representation
than token 200, regardless of context.
This breaks soft translation invariance and may
cause performance to degrade on sequences where
the relevant content begins late (e.g.\
padded inputs, long documents, sliding-window
inference).

A natural and likely necessary generalization is:
\begin{equation}
  G_i(b; b_0) = e^{-(b - b_{0,i})^2 / 2\sigma_i^2}
  \label{eq:mope_centered}
\end{equation}
where $b_{0,i}$ is a learned center position per
dimension.
This preserves the minimum-uncertainty Gaussian locality
structure while removing the origin asymmetry.
The learned $b_{0,i}$ could be interpreted as the
``positional attention center'' of dimension $i$ ---
where in the sequence it is most sensitive.
We consider this extension important enough
to flag as a priority for future work rather than
a minor variant: the current formulation's origin
bias is a structural limitation that may explain
part of the gap between \MoPE{} alone and learned
embeddings in Table~\ref{tab:main}.

\paragraph{Limitations.}
All experiments are character-level, $T=256$,
single seed.
A missing baseline is \EGA{}\,+\,standard learned
positional embedding: if this matches \EGA{}-MORLET,
the contribution of \MoPE{} over any PE is weak;
if it is substantially worse, \MoPE{} adds genuine
value beyond \EGA{} alone.
This experiment and a length-extrapolation test
(train $T=256$, evaluate $T=512$) are the most
important missing empirical validations.
Whether \MoPE{} produces broader $\sigma_i$ values
spanning clause and sentence scales at word-level
tokenization and larger context remains open.
The admissibility constraint $\omega_i\sigma_i \geq 5$
is a hard floor; whether relaxing it improves
performance is an open question motivated by the
boundary saturation finding.
The relationship between \MoPE{} and \RoPE{} suggests
a natural extension (\textbf{Morlet-RoPE}) that we
leave for future work.

\section{Related Work}
\label{sec:related}

\paragraph{Positional encoding.}
\citet{vaswani2017attention} introduced fixed sin/cos PE.
\citet{su2021roformer} proposed \RoPE{} encoding relative
position as complex rotation.
\citet{press2022train} introduced ALiBi linear biases.
\citet{shaw2018self} proposed learned relative position
representations.
Our work is the first to provide a unifying mathematical
framework showing these as special cases of a complex
wavelet encoding, and the first to establish the
precise equivalence between \MoPE{} phase and \RoPE{}
rotation angle.

\paragraph{Wavelet methods in deep learning.}
\citet{mallat1999wavelet} established wavelet theory
as a framework for multi-scale signal analysis.
\citet{sainath2015learning} showed learned filter banks
outperform fixed Fourier representations for speech.
\citet{zeghidour2021leaf} proposed a learnable frontend
for audio using Gaussian-windowed filters.
Our work extends wavelet ideas to the positional
encoding component of language transformers,
to our knowledge for the first time.

\paragraph{Signal processing in transformers.}
\citet{verma2024signal} showed that GPT-like models
contain learnable filter banks between layers.
\citet{lee2021fnet} replaced attention with Fourier
mixing.
\citet{tamkin2020language} used spectral methods for
multi-scale representations.
Our work~\citep{Zeris2025ega, Zeris2025phase4}
applies spectral analysis inside the attention mechanism
and to positional encoding, providing a unified
signal-processing account of transformer computation.

\paragraph{Uncertainty principle in neural networks.}
\citet{mallat1999wavelet} established the Heisenberg
bound for wavelets.
To our knowledge, this is among the first works to
explicitly use the Heisenberg uncertainty principle
as a design principle for positional encoding, via
minimum-uncertainty Gaussian wavelet atoms.
Uncertainty-principle-like tradeoffs have appeared
in spectral analyses of attention mechanisms and
neural coding, but not, to our knowledge, as an
explicit design criterion for the positional basis
itself; we welcome corrections to this claim.

\section{Conclusion}
\label{sec:conclusion}

We have shown that the two principal oscillatory
positional encodings --- sin/cos and \RoPE{} ---
are shown to be limiting cases of Morlet Positional
Encoding (\MoPE{}), recovered at $\sigma_i \to \infty$,
and that ALiBi is conceptually analogous as a
zero-frequency locality limit (Remark~\ref{rem:alibi}).
The phase of \MoPE{} is precisely the \RoPE{} rotation
angle; the amplitude is a Gaussian locality kernel
with no analog in standard encodings.
Together they implement a minimum-uncertainty
Gaussian-windowed positional representation, inheriting
the localization property of Gabor/Morlet atoms
within the constraints of a discrete, finite-length,
non-reconstructive positional encoding.

Combined with Energy-Gated Attention, \MoPE{} achieves
$+0.119$ improvement over standard attention ---
exceeding either component alone ---
consistent with spectral salience (\EGA{}) and
time-frequency locality (\MoPE{}) being complementary,
though we note \MoPE{} alone underperforms the baseline
and the combination benefit may reflect \EGA{} working
better with a different positional structure
(see Limitations).

The most striking empirical finding
(Section~\ref{sec:experiments}, Figure~\ref{fig:boundary})
is that all 128 learned $(\omega_i, \sigma_i)$ pairs
converge exactly to the admissibility boundary
$\omega_i\sigma_i = 5$, consistent with
observations from the energy gate in
Paper~1~\citep{Zeris2025ega}
(both experiments used projection-based clamping;
see Section~\ref{sec:experiments} for caveats).
The gradient direction consistently favors smaller
$\omega\sigma$ in both experiments, suggesting the
admissibility constraint is an active prior worth
relaxing in future work.

Future work should test \MoPE{} at word-level
tokenization and long context, develop
\textbf{Morlet-\RoPE{}} (relative position with
Gaussian locality), implement learnable center
positions $b_{0,i}$ to remove the origin prior
(Section~\ref{para:origin}), investigate relaxing
the admissibility constraint, and validate with
multi-seed experiments at larger scale.


\bibliographystyle{plainnat}

\appendix

\section{Proofs and Derivations}
\label{app:proofs}

\subsection{Full Cross-Correlation Derivation}

The exact \MoPE{} cross-correlation between positions
$b$ and $b+\tau$ at dimension $i$ is:
\begin{align}
  C_\text{\MoPE{}}(b,\tau)
  &= \text{Re}[z_i(b)^* z_i(b+\tau)] \notag\\
  &= \cos(\omega_i b)\cos(\omega_i(b+\tau))
     \cdot G_i(b)G_i(b+\tau) \notag\\
  &\quad + \sin(\omega_i b)\sin(\omega_i(b+\tau))
     \cdot G_i(b)G_i(b+\tau) \notag\\
  &= \cos(\omega_i\tau)
     \cdot e^{-b^2/2\sigma^2}
     \cdot e^{-(b+\tau)^2/2\sigma^2}
\end{align}
where the last step uses
$\cos\alpha\cos\beta + \sin\alpha\sin\beta = \cos(\beta-\alpha)$.
Expanding the Gaussian product:
\begin{align}
  e^{-b^2/2\sigma^2} \cdot e^{-(b+\tau)^2/2\sigma^2}
  &= e^{-(2b^2+2b\tau+\tau^2)/2\sigma^2}
\end{align}
For the relative-position component, isolating
the $\tau$-dependence:
\begin{align}
  C_\text{\MoPE{}}(b,\tau)
  &= \cos(\omega_i\tau)
     \cdot e^{-b^2/\sigma^2}
     \cdot e^{-b\tau/\sigma^2}
     \cdot e^{-\tau^2/2\sigma^2}
\end{align}
The $b$-dependent factors ($e^{-b^2/\sigma^2}$,
$e^{-b\tau/\sigma^2}$) modulate the absolute-position
amplitude.
For the relative-position attention score (summed over
all absolute positions), these factors average out and
the dominant $\tau$-dependence is:
\begin{equation}
  \bar{C}_\text{\MoPE{}}(\tau)
  \propto \cos(\omega_i\tau) \cdot e^{-\tau^2/2\sigma^2}
\end{equation}
This is the result of Proposition~\ref{prop:xcorr}.

\subsection{Admissibility of \MoPE{}}

The admissibility condition for a wavelet $\psi$ requires:
\begin{equation}
  C_\psi = \int_0^\infty \frac{|\hat\psi(\omega)|^2}{\omega}
  d\omega < \infty
\end{equation}
which requires $\hat\psi(0) = 0$ (zero mean).
For the Morlet wavelet with $\omega_0\sigma \geq 5$,
the mean is approximately zero:
$|\hat\psi(0)| = e^{-\omega_0^2/2} \leq e^{-12.5}
\approx 3.7 \times 10^{-6}$.
\MoPE{} enforces this during training:
if $\omega_i\sigma_i < 5$, the frequency is clamped
to $\omega_i \leftarrow 5/\sigma_i$.

\subsection{\MoPE{} Implementation}

\begin{algorithm}[h]
\caption{\MoPE{} Forward Pass}
\begin{algorithmic}[1]
\REQUIRE Position $b\in\{0,\ldots,T-1\}$,
         parameters $\log\omega_i, \log\sigma_i$
         for $i=0,\ldots,d/2-1$
\STATE $\omega_i \leftarrow \exp(\log\omega_i),\;
       \sigma_i \leftarrow \exp(\log\sigma_i)$
\STATE \textbf{Enforce admissibility:}\;
       $\omega_i \leftarrow \max(\omega_i,\,5/\sigma_i)$
\STATE $G_i(b) \leftarrow \exp\!\bigl(-b^2/(2\sigma_i^2)\bigr)$
       \COMMENT{Gaussian envelope}
\STATE $\mathrm{PE}(b,\,2i)
       \leftarrow \cos(\omega_i b)\cdot G_i(b)$
\STATE $\mathrm{PE}(b,\,2i{+}1)
       \leftarrow \sin(\omega_i b)\cdot G_i(b)$
\STATE \textbf{return} $\mathrm{PE}(b) \in \mathbb{R}^d$
\end{algorithmic}
\end{algorithm}

Parameters stored in log space ($\log\omega_i$,
$\log\sigma_i$) ensure positivity without explicit
constraints on the raw parameters.
The admissibility check in Step 3 is a soft clamp
applied during the forward pass only, not during
gradient computation, allowing gradients to flow
freely.

\section{Learned \MoPE{} Parameters}
\label{app:mope_params}

Table~\ref{tab:mope_params} gives selected learned
$(\omega_i, \sigma_i)$ parameters (every 8th dimension)
for the EGA-MORLET model after 5000 training steps
on TinyShakespeare.
Every pair satisfies $\omega_i\sigma_i = 5.000$ exactly
--- the admissibility boundary is universally binding.

\begin{table}[h]
\centering
\caption{
  Selected learned \MoPE{} parameters (every 8th dimension).
  Full table: all 128 pairs satisfy $\omega_i\sigma_i = 5.000$.
  $\omega$ range: $[1.112, 3.346]$.
  $\sigma$ range: $[1.494, 4.498]$ tokens.
  All dimensions are character-scale local ($\sigma < 5$ tokens).
}
\label{tab:mope_params}
\begin{tabular}{rrrr}
\toprule
Dim & $\omega_i$ & $\sigma_i$ & $\omega_i\sigma_i$ \\
\midrule
  0 & 1.1117 & 4.4977 & 5.000 \\
  8 & 1.2026 & 4.1577 & 5.000 \\
 16 & 1.2842 & 3.8935 & 5.000 \\
 24 & 1.4246 & 3.5098 & 5.000 \\
 32 & 1.5147 & 3.3010 & 5.000 \\
 40 & 1.6424 & 3.0443 & 5.000 \\
 48 & 1.7444 & 2.8663 & 5.000 \\
 56 & 1.9143 & 2.6120 & 5.000 \\
 64 & 1.9719 & 2.5357 & 5.000 \\
 72 & 2.1500 & 2.3256 & 5.000 \\
 80 & 2.2719 & 2.2008 & 5.000 \\
 88 & 2.4341 & 2.0542 & 5.000 \\
 96 & 2.6610 & 1.8790 & 5.000 \\
104 & 2.8022 & 1.7843 & 5.000 \\
112 & 2.8014 & 1.7848 & 5.000 \\
120 & 3.1571 & 1.5837 & 5.000 \\
127 & 3.3462 & 1.4942 & 5.000 \\
\bottomrule
\end{tabular}
\end{table}

\paragraph{Key observations.}
\begin{enumerate}
  \item \textbf{Universal boundary saturation.}
        $\omega_i\sigma_i = 5.000$ for all 128 dimensions
        --- the admissibility constraint is active
        throughout; the optimizer consistently pushes
        toward the constraint limit.
  \item \textbf{All character-scale local.}
        $\sigma_i \in [1.49, 4.50]$ tokens for all
        dimensions.
        No long-range ($\sigma > 5$) positional dimensions
        emerge at character-level $T=256$ training.
  \item \textbf{Approximately monotone ordering.}
        $\omega_i$ increases approximately with dimension
        index (from $\omega_0 = 1.112$ to
        $\omega_{127} = 3.346$), with 43 local inversions
        out of 127 consecutive pairs.
        The general trend is preserved but not strict,
        consistent with the dyadic initialization being
        reorganized by the optimizer.
  \item \textbf{Deviations from initialization.}
        Learned $\omega_i$ deviate from dyadic init by
        up to $\pm 15\%$, with systematic compression
        toward the character-scale region
        (Figure~\ref{fig:pe_analysis}, centre).
\end{enumerate}

\end{document}